\newtheorem{theorem}{Theorem}
\newtheorem{lemma}{Lemma}
\newtheorem{definition}{Definition}
\title{\LARGE \bf
	Multi-objective Conflict-based Search Using Safe-interval Path Planning
}
\author{Zhongqiang Ren$^{1}$, Sivakumar Rathinam$^{2}$, Maxim Likhachev$^{1}$ and Howie Choset$^{1}$
	\thanks{$^{1}$ Zhongqiang Ren, Maxim Likhachev and Howie Choset are with Carnegie Mellon University, 5000 Forbes Ave., Pittsburgh, PA 15213, USA. 
	}%
	\thanks{$^{2}$Sivakumar Rathinam is with Texas A\&M University,
		College Station, TX 77843-3123.
	}
}
\begin{document}
	
	\maketitle
	
		\thispagestyle{empty}
		\pagestyle{empty}
		\thispagestyle{plain}
		\pagestyle{plain}
		\pagenumbering{arabic}
	
	\begin{abstract}
		This paper addresses a generalization of the well known multi-agent path finding (MAPF) problem that optimizes multiple conflicting objectives simultaneously such as travel time and path risk. This generalization, referred to as multi-objective MAPF (MOMAPF), arises in several applications ranging from hazardous material transportation to construction site planning. In this paper, we present a new multi-objective conflict-based search (MO-CBS) approach that relies on a novel multi-objective safe interval path planning (MO-SIPP) algorithm for its low-level search. We first develop the MO-SIPP algorithm, show its properties and then embed it in MO-CBS. We present extensive numerical results to show that (1) there is an order of magnitude improvement in the average low level search time, and (2) a significant improvement in the success rates of finding the Pareto-optimal front can be obtained using the proposed approach in comparison with the previous MO-CBS.

	\end{abstract}
	
	
	\graphicspath{{./figures/}}
	
	\section{Introduction}\label{sec:intro}
	Multi-agent path finding (MAPF), as its name suggests, computes an ensemble of collision-free paths for multiple agents between their respective start and goal locations.
Conventional MAPF problems~\cite{stern2019multi} typically consider optimizing a single path criterion such as path length or travel time.
\ifthenelse{\boolean{shortver}}{%
	However, in many real-world planning applications~\cite{montoya2013multiobjective,xu2021multi}, multiple conflicting objectives such as path length, travel risk and other domain-specific metrics are simultaneously optimized.
}{
	However, in many real-world planning applications~\cite{montoya2013multiobjective,xu2021multi,hayat2017multi,erkut2007hazardous}, multiple conflicting objectives such as path length, travel risk and other domain-specific metrics are simultaneously optimized.
}
When multiple objectives cannot be readily converted into a single weighted objective, multi-objective planners\cite{ehrgott2005multicriteria} that aim to find a set of Pareto-optimal solutions are required.
A solution is Pareto-optimal if there exists no other solution that will yield an improvement in one objective without causing a deterioration in at least one of the other objectives.
Finding a Pareto-optimal set for multi-objective MAPF (MOMAPF) problems while ensuring conflict-free paths for agents in each solution is quite challenging as the size of the Pareto-optimal set may grow exponentially with respect to the number of agents as well as the dimension of the search space\cite{serafini1987some,yu2013structure_nphard}.
In this article, we propose a new multi-objective conflict-based search (MO-CBS) approach to find the Pareto-optimal set for MOMAPF.

Conflict-based search (CBS)~\cite{sharon2015conflict} poses MAPF as a graph search problem and computes an optimal solution for agents with respect to a single objective.
CBS is a two-level search algorithm where on the high level, collisions between agents are detected and constraints are generated from these collisions and added to the low level search.
On the low level, a single-agent planner, such as A*, is invoked to plan paths in a time-augmented graph to satisfy all the added constraints. 
Leveraging both CBS and multi-objective dominance~\cite{ehrgott2005multicriteria}, multi-objective CBS (MO-CBS) has been proposed in our prior work~\cite{ren2021multi} to compute a Pareto-optimal set of solutions for MOMAPF.
MO-CBS employs a similar two-level search workflow, where on the low level, an A*-like multi-objective path planner, such as NAMOA*~\cite{mandow2008multiobjective}, is used to search over a time-augmented graph to compute Pareto-optimal individual paths for an agent subject to constraints.

We learnt from our prior work~\cite{ren2021multi} that using NAMOA* for the low level search over a time-augmented graph is inefficient: First, optimal search over time augmented graphs are time consuming because of the inclusion of the time dimension~\cite{phillips2011sipp}; Second, the number of Pareto-optimal paths in a time-augmented graph can grow exponentially with respect to the number of nodes to be searched~\cite{hansen1980bicriterion}.
Since the low level search is repeatedly invoked in any CBS-based methods, using an inefficient algorithm at the low level can significantly burden a CBS-based algorithm, which includes MO-CBS.
This work aims to address this issue by developing a new low level search algorithm called multi-objective safe-interval path planning (MO-SIPP).

The proposed MO-SIPP leverages SIPP~\cite{phillips2011sipp} to search the time dimension efficiently while optimizing multiple objectives.
We first show that the MO-SIPP is able to compute all Pareto-optimal solutions for a single agent.
Then, we employ MO-SIPP as the low level planner for MO-CBS and verify our idea using an MAPF benchmark set~\cite{stern2019multi}.
From our numerical results obtained using the proposed approach for instances up to 3 objectives, we observed (1) an order of magnitude improvement in the average low level search time, and (2) around 20\% improvement in the success rates of finding Pareto-optimal solutions within a fixed time limit in some of the maps.

The rest of the article discusses the related work in Sec.~\ref{sec:related}.
A review of SIPP and the proposed MO-SIPP are presented in Sec.~\ref{sec:mosipp}.
MO-CBS and its improved version via MO-SIPP is discussed in Sec.~\ref{sec:mocbssipp} with numerical results in Sec.~\ref{sec:result}.
Finally, we conclude and outline the future work in Sec.~\ref{sec:conclude}.

	
%

	\section{Related Work}\label{sec:related}
	\ifthenelse{\boolean{shortver}}{%
		
\subsection{Multi-objective Path Planning}
Existing approaches for {\it single-agent}, multi-objective path planning (MOPP) problems compute an exact or approximated set of Pareto-optimal paths for the agent between its start and goal locations with respect to \emph{multiple} objectives.
One common approach to solve a MOPP is to weight the multiple objectives and transform it to a single-objective problem~\cite{emmerich2018tutorial,ehrgott2005multicriteria}.
The transformed problem can then be solved using any single-objective algorithm.
This approach, however, requires in-depth domain knowledge to design the weighting procedure; it may also requires one to repeatedly solve the transformed single-objective problem for different sets of weights in order to capture the Pareto-optimal set which is quite challenging \cite{marler2004survey}.

Additionally, MOPP has been solved directly via graph search techniques~\cite{moastar,mandow2008multiobjective,ulloa2020simple} and evolutionary algorithms~\cite{weise2020momapf} where a Pareto-optimal set of solutions is computed exactly or approximated.
These graph-based approaches provide guarantees about finding all Pareto-optimal solutions but can run slow for hard cases, where the number of Pareto-optimal solutions is large. 
 MO-CBS belongs to this category of search techniques that directly computes a Pareto-optimal set with quality guarantees.

\subsection{Multi-agent Path Finding}
Various methods have been developed to compute an optimal solution for multi-agent path finding (MAPF) problems~\cite{goldenberg2014enhanced,wagner2015subdimensional,surynek2016efficient,lam2019bcp,sharon2015conflict}.
In addition, different variants of MAPF have also been considered~\cite{andreychuk2019multi,ren2021loosely,ren2021ms,honig2018conflict,ma2019lifelong,ma2016optimal,cohen2019optimal}, to name a few.
However, all these methods optimizes a single objective defined over paths.

For multi-objective MAPF (MOMAPF), evolutionary algorithms~\cite{weise2020momapf} have been leveraged to solve a variant of MOMAPF where agents are not allowed to wait in place and collisions between the agent's paths are modeled in one of the objectives and not as a constraint.
Recently, by leveraging M*~\cite{wagner2015subdimensional} and CBS~\cite{sharon2015conflict} respectively, MOM* \cite{ren2021subdimensional} and MO-CBS \cite{ren2021multi} have been proposed to solve the MOMAPF.
As aforementioned, in this paper, we propose a new low level search algorithm called MO-SIPP and embed it in MO-CBS. This new approach outperforms the standard MO-CBS in all cases in terms of success rates in finding the Pareto-optimal set within a fixed time limit.

\subsection{Safe Interval Path Planning}
Safe interval path planning (SIPP)~\cite{phillips2011sipp} was originally developed to compute a \emph{single-agent} collision-free trajectory from a start to a goal location while minimizing the arrival time, in an environment with dynamic obstacles moving along known trajectories.
SIPP has been extended in several directions in the literature, such as sub-optimal SIPP~\cite{phillips2011planning,yakovlev2020revisiting}, anytime SIPP~\cite{narayanan2012anytime}, generalized SIPP~\cite{gonzalez2012using}, etc.

On the other hand, SIPP has also been used for several related problems such as pickup and delivery problems~\cite{ma2019lifelong} and any-angle path finding~\cite{yakovlev2017any}. With respect to incorporating SIPP into CBS, recent work in \cite{andreychuk2019multi} proposes a new method called Continuous-time CBS which aims to handle agents moving at different speeds. ECBS-CT~\cite{cohen2019optimal} extends SIPP and CBS to solve a multi-agent motion planning problem which computes kinodynamically feasible paths for agents.
However, to our limited knowledge, no existing work has leveraged SIPP to minimize multiple objectives.
The proposed multi-objective SIPP (MO-SIPP) in this work takes a first step to fill this gap.
Leveraging the proposed MO-SIPP, we then achieve an order of magnitude speed up in the low level search of MO-CBS.

	}{
		
\subsection{Multi-objective Path Planning}
Existing approaches for {\it single-agent}, multi-objective path planning (MOPP) problems compute an exact or approximated set of Pareto-optimal paths for the agent between its start and goal locations with respect to \emph{multiple} objectives.
The applications of MOPP can be found in construction site routing~\cite{soltani2004fuzzy}, hazardous material transportation~\cite{erkut2007hazardous}, and others~\cite{montoya2013multiobjective,xu2021multi}.
One common approach to solve a MOPP is to weight the multiple objectives and transform it to a single-objective problem~\cite{emmerich2018tutorial,ehrgott2005multicriteria}. The transformed problem can then be solved using any single-objective algorithm. This approach, however, requires in-depth domain knowledge to design the weighting procedure; it may also requires one to repeatedly solve the transformed single-objective problem for different sets of weights in order to capture the Pareto-optimal set which is quite challenging \cite{marler2004survey}.



Additionally, MOPP has been solved directly via graph search techniques~\cite{moastar,mandow2008multiobjective,ulloa2020simple} and evolutionary algorithms~\cite{weise2020momapf} where a Pareto-optimal set of solutions is computed exactly or approximated.
These graph-based approaches provide guarantees about finding all Pareto-optimal solutions but can run slow for hard cases, where the number of Pareto-optimal solutions is large. 
MO-CBS belongs to this category of search techniques that directly computes a Pareto-optimal set with quality guarantees.

\subsection{Multi-agent Path Finding}
Various methods have been developed to compute an optimal solution for multi-agent path finding (MAPF) problems including A*-based approaches~\cite{standley2010finding,goldenberg2014enhanced}, subdimensional expansion~\cite{wagner2015subdimensional}, compilation-based solver~\cite{surynek2016efficient}, integer programming-based methods~\cite{lam2019bcp} and conflict-based search (CBS)~\cite{sharon2015conflict}.
In addition, different variants of MAPF have also been considered, such as agents moving with different speeds~\cite{andreychuk2019multi,ren2021loosely}, agents moving with stochastic travel times~\cite{peltzer2020stt}, visiting multiple goals along the path~\cite{ren2021ms,honig2018conflict}, pickup-and-delivery tasks~\cite{ma2019lifelong,ma2016optimal}, satisfying kinodynamic constraints~\cite{cohen2019optimal} to name a few.
However, all these methods optimize a single objective defined over paths.

For multi-objective MAPF (MOMAPF), evolutionary algorithms~\cite{weise2020momapf} have been leveraged to solve a variant of MOMAPF where agents are not allowed to wait in place and collisions between the agent's paths are modeled in one of the objectives and not as a constraint.
Recently, by leveraging M*~\cite{wagner2015subdimensional} and CBS~\cite{sharon2015conflict} respectively, MOM* \cite{ren2021subdimensional} and MO-CBS \cite{ren2021multi} have been proposed to solve the MOMAPF. 
As we mentioned earlier, in this paper, we propose a new low level search algorithm called MO-SIPP and embed it in MO-CBS. This new approach outperforms the standard MO-CBS in all cases in terms of success rates in finding the Pareto-optimal set within a fixed time limit.

\subsection{Safe Interval Path Planning}
Safe interval path planning (SIPP)~\cite{phillips2011sipp} was originally developed to compute a \emph{single-agent} collision-free trajectory from a start to a goal location while minimizing the arrival time, in an environment with dynamic obstacles moving along known trajectories.
SIPP, as a fast variant of A*, uses safe-intervals rather than time steps to represent the time dimension. This approach significantly reduces the size of the search space, and thus improves search efficiency (in comparison to applying A* over the entire time-augmented graph).

SIPP has been extended in several directions in the literature. To name a few, sub-optimal SIPP~\cite{phillips2011planning,yakovlev2020revisiting} trades off between search efficiency and solution quality.
Anytime SIPP~\cite{narayanan2012anytime} begins by computing a sub-optimal feasible solution quickly at first and then improves the solution quality until the allocated planning time runs out.
GSIPP~\cite{gonzalez2012using} generalizes SIPP to minimize an objective other than arrival time. SIPP has also been used for pickup and delivery problems~\cite{ma2019lifelong} and any-angle path finding~\cite{yakovlev2017any}, etc. With respect to incorporating SIPP into CBS, recent work in \cite{andreychuk2019multi} proposes a new method called Continuous-time CBS which aims to handle agents moving at different speeds. ECBS-CT~\cite{cohen2019optimal} extends SIPP and CBS to solve a multi-agent motion planning problem which computes kinodynamically feasible paths for agents. No existing work we are aware of has leveraged SIPP to minimize multiple objectives. The proposed multi-objective SIPP (MO-SIPP) in this work takes a first step to fill this gap. Leveraging the proposed MO-SIPP, we then achieve an order of magnitude speed up in the low level search of MO-CBS.

	}

	\section{Multi-objective Safe-Interval Path Planning}\label{sec:mosipp}
	
\subsection{Preliminaries}
We begin with a description about the problem solved by safe-interval path planning (SIPP) and then provide a summary of SIPP.
Given a graph $G=(V,E)$, let $G^t = (V^t, E^t) = G \times \{0,1,\dots,T\}$ denote a time-augmented graph of $G$, where each vertex $v \in V^{t}$ is defined as $v=(u,t), u\in V, t \in \{0,1,\dots,T\}$ and $T$ is a pre-defined time horizon, which is typically a large positive integer. Edges within $G^{t}$ is represented as $E^{t}= V^{t} \times V^{t}$ where $(u_1,t_1),(u_2,t_2)$ is connected in $G^t$ if $(u_1,u_2) \in E$ and $t_2=t_1 + 1$.
Wait in place is also allowed in $G^{t}$ which means $(u,t),(u,t+1), u \in V$ is connected in $G^t$.
A dynamic obstacle with a known trajectory is represented as a set of subsequently occupied nodes in $G^t$.
An illustrative example of $G$, $G^t$ and dynamic obstacles can be found in Fig.~\ref{fig:sipp_eg}.
For the rest of the article, when a node or edge is mentioned, we point out to which graph ($G$ or $G^t$) it belongs if needed.
Given an initial node $v_{init}$ and goal node $v_{goal}$ in $G$, SIPP aims to plan a collision-free trajectory $\tau$ from $v_{init}$ at time zero to $v_{goal}$ with the minimum arrival time.

To solve the problem, A* can be applied to search over $G^t$ to find a collision-free trajectory with the minimum arrival time.
However, this approach is very inefficient as the time dimension is searched in a step-by-step manner.
To overcome this challenge, SIPP~\cite{phillips2011sipp} compresses time steps into intervals.
Let tuple $s=(v,[t_a,t_b])$ denote a search state at node $v\in G$ where $t_a,t_b$ are the beginning and ending time steps of a safe (time) interval respectively.
A safe interval is a maximal contiguous time range in which a node is not occupied by any dynamic obstacles. 
State $s=(v,[t_a,t_b])$ indicates that node $v$ is not occupied by any dynamic obstacle and hence the name safe interval for $[t_a,t_b]$.
Note that, any two safe intervals at the same node never intersect.
Two states are the same, if both states share the same node, beginning time step and ending time step.
Otherwise, two states are different.
Especially, two states with the same node but different safe intervals are different states.

\begin{figure}[htbp]
	\centering
	\includegraphics[width=\linewidth,height=2.3cm]{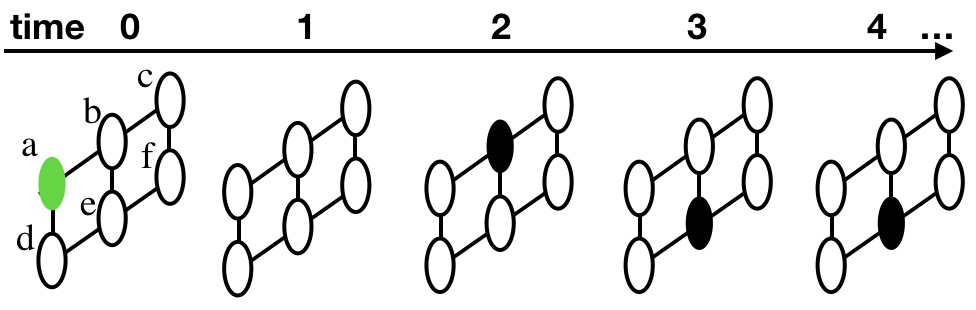}
	\caption{A toy example of SIPP over a graph $G$ with $6$ (free) nodes.
		The time-augmented graph $G^t$ is visualized with time steps between $0$ and $4$ with edges connecting different time steps omited to make the plot clear.
		A dynamic obstacle enters the environment at node $b$ at time $2$, moves to node $e$ at time $3$ and stays there afterwards.
		At node $b$, there are two possible states $(b,[0,1])$ and $(b, [3,\infty])$ while at node $e$, there is only one possible state $(e,[0,2])$.
		The agent's initial state is $(a,[0,\infty])$.
		To expand it, one successor is generated at node $d$, which is the state $(d,[0,\infty])$ with the earliest arrival time $1$, and two successors are generated at node $b$, which are $(b,[0,1])$ with the earliest arrival time $1$ and $(b, [3,\infty])$ with the earliest arrival time $3$.}
	\label{fig:sipp_eg}
\end{figure}

SIPP conducts A*-like heuristic search.
For each state $s$, let $g(s)$ represent the earliest arrival time at state $s$ at any time of the search and let $h(s)$ denote the heuristic value of $s$, which underestimates the cost-to-goal, i.e. travel time to goal, from $s$.
In addition, the $f$-value of a state is $f(s):=g(s)+h(s)$.
At any time of the search, let OPEN denote the open list containing candidate states to be expanded, where candidate states are prioritized by their $f$-values.

SIPP starts by inserting the initial state $s_o$ into OPEN ($s_o$ is a tuple of $v_{init}$ and the safe interval with a beginning time set to 0).
In each search iteration, a state with the minimum $f$-value in OPEN is popped from OPEN and expanded.
To expand a state $s=(v,[t_a,t_b])$, SIPP considers all reachable successor states from $s$ and finds the earliest possible arrival time onto each of those states via ``wait and move'' action, i.e. wait for an minimum amount of time to arrive at the successor state as early as possible.
An illustration of this expansion process can be found in Fig.~\ref{fig:sipp_eg}.
During the search, SIPP records the so-far earliest arrival time at each state $s$ as $g(s)$.
When a new trajectory is found to reach state $s$ (from $v_{init}$) with an earlier arrival time, $g(s)$ is updated.
Here, a trajectory from $v_{init}$ to some state $s=(v,[t_a,t_b])$ indicates a trajectory from $v_{init}$ to $v$ with an arrival time at $v$ within safe interval $[t_a,t_b]$.
When a state at the goal node (i.e. the node contained in the state is $v_{goal}$) is expanded, a trajectory with the minimum arrival time is found and SIPP terminates.
The key principle behind SIPP can be summarized as follows.
\begin{theorem}\label{thm:sipp}
Arriving at a state $s$ at the earliest possible time can (1) generate the maximum number of successors from $s$ and (2) find the optimal (i.e. minimum arrival time) trajectory from $v_{init}$ to $s$ during the search.
\end{theorem}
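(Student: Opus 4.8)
The plan is to prove the two claims of Theorem~\ref{thm:sipp} separately, both by exploiting the structure of safe intervals and the ``wait and move'' expansion.

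\textbf{Claim (1): earliest arrival maximizes the set of successors.} First I would fix a state $s=(v,[t_a,t_b])$ and consider a trajectory reaching $v$ at some time $t \in [t_a,t_b]$. Expanding $s$ from arrival time $t$ means: for each neighbor $w$ of $v$ in $G$, wait at $v$ (which is legal only while we stay inside the safe interval $[t_a,t_b]$) and then move to $w$, landing in whichever safe interval of $w$ we can reach. I would argue that the set of successor \emph{states} (node--interval pairs) generated is monotone non-increasing in $t$: if $t' < t$ and arriving at time $t$ allows a ``wait-and-move'' into some state $s'=(w,[t_a',t_b'])$, then arriving at the earlier time $t'$ also allows it, because the waiting window $[t',t_b]$ at $v$ contains $[t,t_b]$, so any legal move time still available from $t$ is also available from $t'$. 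Hence arriving as early as possible at $s$ — i.e. at time $g(s)$ — yields a superset of the successors obtainable from any later arrival time within the same safe interval, which is the maximality statement. The routine point to check is that the earliest move time into a given neighbor's safe interval, and the legality (no collision on the edge / at the neighbor), depend on the arrival time only through the interval $[t,t_b]$ of admissible departure times, which shrinks as $t$ grows.

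\textbf{Claim (2): the search records the minimum arrival time at $s$.} Here I would use the A*-style argument adapted to SIPP. Because $h$ is admissible (underestimates travel time to goal) and every edge has non-negative cost (time advances by at least one step, or we wait), the usual argument shows that when a state is popped, and more importantly, along the life of the search the recorded $g(s)$ converges to the true optimal arrival time at $s$. The key inductive claim is: for any state $s$ with optimal arrival time $g^*(s)$, there is, at all times before $s$ is expanded, a state $s'$ on an optimal trajectory to $s$ already in OPEN with $g(s')=g^*(s')$. The induction is along an optimal trajectory $v_{init}=s_0,s_1,\dots,s_k=s$; the inductive step uses exactly Claim (1): once $s_{i}$ has been expanded with $g(s_i)=g^*(s_i)$, expanding it generates $s_{i+1}$ (it is among the maximal successor set precisely because we arrived earliest) with the correct earliest arrival value, so $g(s_{i+1})$ is set to $g^*(s_{i+1})$. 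Combined with the fact that OPEN is processed in order of non-decreasing $f$-value and $f$ is a lower bound on the optimal completion cost, the goal state is not expanded before every $g(\cdot)$ on the relevant optimal path has stabilized, so the arrival time found at $s$ is optimal.

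The main obstacle I anticipate is making Claim~(1) fully precise, in particular pinning down what ``wait and move'' produces when the neighbor $w$ has multiple safe intervals: arriving earlier at $v$ could in principle let the agent reach an \emph{additional, earlier} safe interval of $w$ while still reaching all the later ones, and I must verify the monotonicity at the level of \emph{which interval of $w$} is entered, not merely ``some state at $w$''. I would handle this by arguing interval-by-interval: for each safe interval $I$ of $w$, the set of arrival times $t$ at $v$ for which a collision-free wait-and-move into $(w,I)$ exists is an interval of the form $[\,\cdot,\,t_b]$ (upper-bounded only by leaving $v$'s safe interval), hence downward closed within $[t_a,t_b]$; taking the union over all $I$ preserves this, giving the clean monotonicity needed. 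The second claim is then a fairly standard Dijkstra/A* correctness argument once Claim~(1) is in hand, so I do not expect it to be the bottleneck.
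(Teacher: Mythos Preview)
Your argument is correct, but be aware that the paper does not actually prove Theorem~\ref{thm:sipp}: it is stated only as a summary of the principle underlying the original SIPP algorithm, with no accompanying proof. The closest the paper comes is Lemma~\ref{lem:max_succ}, which establishes your Claim~(1) in the MO-SIPP setting with a one-line version of your monotonicity argument (the departure window $[t_r,t_b]$ shrinks as $t_r$ grows, so earlier arrival yields a superset of reachable successor states). Your interval-by-interval refinement---showing that for each safe interval $I$ of a neighbor $w$, the set of arrival times at $v$ admitting a legal wait-and-move into $(w,I)$ is downward closed within $[t_a,t_b]$---is more careful than anything the paper writes, and is the right way to make the claim precise.

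For Claim~(2), the paper gives nothing at all; it simply asserts the property and later notes that it fails in the multi-objective setting. Your inductive A* argument (maintain the invariant that some state on an optimal path to $s$ sits in OPEN with correct $g$-value, and use Claim~(1) to propagate this through expansion) is the standard route and is sound. So your proposal is not so much a different approach as a genuine proof where the paper offers only a statement; what you gain is rigor, what the paper gains is brevity by deferring to~\cite{phillips2011sipp}.
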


We note here that when we introduce SIPP to multi-objective settings in the subsequent sections, this principle needs to be carefully revisited.

\subsection{Multi-objective Problem Formulation}
This section follows the notations and concepts introduced in the previous section.
For a multi-objective trajectory planning problem, each edge $e\in E$ is associated with a non-negative cost vector $\vec{c}(e) \in (\mathbb{R^+})^{M}$ with $M$ being a positive integer and $\mathbb{R^+}$ being the set of non-negative real numbers.
The wait in place action takes a constant non-negative cost vector $\vec{c}_{wait}$ at any nodes in $G$.
Let $\vec{g}(\tau)$ denote the accumulated cost vector of a trajectory $\tau$, which is the sum of the cost vector of all edges present in $\tau$.
To compare any two trajectories, we compare the cost vectors between them.
Given two vectors $a$ and $b$, $a$ \textit{dominates} $b$ if every component in $a$ is no larger than the corresponding component in $b$ and there exists at least one component in $a$ that is strictly less than the corresponding component in $b$.
Formally, it is defined as follows.
\begin{definition}[Dominance \cite{ehrgott2005multicriteria}]\label{def:dom}
	Given two $M$-dimensional vectors $a$ and $b$, $a$ dominates $b$, notationally $a \succeq b$, if $\forall m \in \{1,2,\dots,M\}$, $a(m) \leq b(m)$, and there exists $m \in \{1,2,\dots,M\}$ such that $a(m) < b(m)$.
\end{definition}
If $a$ does not dominate $b$, we represent this non-dominance as $a \nsucceq b$.
Any two trajectories connecting the same pair of nodes are non-dominated to each other if the corresponding cost vectors do not dominate each other.
The goal of the multi-objective trajectory planning problem is to find $\mathcal{T}^*$, a set of all collision-free, non-dominated ($i.e.$ Pareto-optimal) trajectories with unique cost vectors.\footnote{If two trajectories have the same cost vectors, only one of them is kept.}

\vspace{-1mm}
\subsection{Algorithm}\label{sec:mosipp:alg}

As in SIPP, let a search state $s=(v,[t_a,t_b])$ be a tuple of a node and a safe interval.
In SIPP, for each closed state $s$, keeping track of just one trajectory from the start to $s$ with minimum arrival time is sufficient to compute an optimal solution to the goal vertex. In a multi-objective problem, however, there can be multiple trajectories with non-dominated cost vectors connecting $v_{init}$ and state $s$ and all of them need to be recorded at $s$ in order to compute a set of cost-unique Pareto-optimal trajectories $\mathcal{T}^*$ to the goal vertex. The algorithm also needs to be able to discriminate between those trajectories that arrive at the same state $s$ with possibly different cost vectors and arrival times.
Based on this observation, let $l=(s,\vec{g}, t_{r})$ denote a \emph{label} at state $s$, which identifies a specific trajectory from $v_{init}$ to $s$ with an arrival time $t_r$ and a cost vector $\vec{g}$.
Additionally, let $\vec{g}(l)$, $t_r(l)$ and $s(l)$ represent the cost vector, arrival time and state associated with label $l$ respectively.
Also, let $v(l)$, $t_a(l)$ and $t_b(l)$ denote the node (in $G$), beginning time and ending time of the safe interval of state $s(l)$ respectively.

As shown in Algorithm \ref{alg:mo_sipp}, multi-objective safe-interval path planning (MO-SIPP) in general has a similar workflow as SIPP (and A*).
The key differences are presented below.

 \begin{algorithm}[tb]
 	\caption{Pseudocode for MO-SIPP}\label{alg:mo_sipp}
 	\small
 	\begin{algorithmic}[1]
 		\State{$l_o\gets{(s_o,\vec{0},0)}$}
 		\State{add $l_o$ into OPEN}
 		\State{$\alpha(s) \gets \emptyset, \forall s$}
 		\State{add $l_o$ into $\alpha(s_o)$}
 		\State{$\mathcal{T}\gets \emptyset$}
 		\While{OPEN not empty} \Comment{main search loop}
 		\State{$l \gets$ OPEN.pop() }
 		\State{\textbf{if} $v(l)$ is the goal node \textbf{then}}
 		\State{\indent$\tau\gets$\text{Reconstruct($l$)}}
 		\State{\indent add $\tau$ into $\mathcal{T}$}
 		\State{\indent \text{FilterOpen($l$)}}
 		\State{\indent \textbf{continue}}
 		\State{$L_{succ} \gets$ \text{GetSuccessors}($l$) } 
 		\ForAll{$l' \in L_{succ}$ }
 		\State{\textbf{if} \text{LabelDominated($l'$)} \textbf{then}}
 		\State{\indent \textbf{continue}}
 		\State{$f(l') \gets g(l')$ + $h(s(l'))$}
 		\State{parent($l'$) $\gets l$}
 		\State{add $l'$ into OPEN}
 		\EndFor
 		\EndWhile \label{}
 		\State{\textbf{return} $\mathcal{T}$}
 	\end{algorithmic}
 \end{algorithm}

\subsubsection{Cost vectors}
Scalar values $g,h,f$ are replaced with corresponding cost vectors in MO-SIPP.
To make the notation clear, we use $\vec{g},\vec{h},\vec{f}$ to denote the cost vectors in MO-SIPP.
Specifically, $\vec{g}$ is associated with labels and it describes the cost-to-come from $v_{init}$.
$\vec{h}$ is defined over states and $\vec{h}(s)$ represents a component-wise underestimate of the cost vector of all non-dominated paths from state $s$ to the goal node.
Finally, $\vec{f}$ is defined over labels and $\vec{f}(l):=\vec{g}(l)+\vec{h}(s(l))$, which underestimates the cost vector of all trajectories connecting the start and the goal via label $l$.

\subsubsection{Label expansion}
OPEN contains all candidate labels for further expansion.
In every search iteration, a label with a non-dominated cost vector within OPEN is selected for further expansion.
The expansion step is similar to the expansion in SIPP with the only difference that MO-SIPP expands and generates new labels (rather than states as in SIPP).
Specifically, to expand a label $l$, MO-SIPP considers all the reachable states from state $s(l)$, and then for each reachable state $s'$, the earliest arrival time $t_r'$ and the cost vector $\vec{g}'$ for reaching $s'$ from $s(l)$ is computed.
A successor label $l'=(s',\vec{g}', t_r')$ is then generated.
After expanding a label $l$, a set of successor labels are obtained for comparison.

\subsubsection{Label comparison}
The comparison step in MO-SIPP differs from the one in SIPP.
In SIPP, when a new trajectory from $v_{init}$ to state $s$ is found, the $g$-value of this new trajectory and the previously stored $g$-value at $s$ is compared and the smaller value is kept. 
In MO-SIPP, multiple non-dominated trajectories, which are represented by labels, need to be tracked at state $s$.
To do so, first, a new type of dominance between labels is defined as follows.
\begin{definition}[Label-dominance]\label{def:label-dom}
	Given two labels $l=(s, \vec{g}, t_r)$ and $l'=(s', \vec{g}', t_r')$ with $s=s'$ (i.e. nodes and safe intervals in both $s$ and $s'$ are the same), if the following two conditions both hold: (i) $t_r \leq t_r'$, (ii) $\vec{g} + (t_r'-t_r)\vec{c}_{wait} \leq \vec{g}'$ (in (ii), $\leq$ means component-wise no larger than),
	then we say $l$ \emph{label-dominates} $l'$ with notation $l \succeq_l l'$. (Here, $\succeq_l$ can be intuitively interpreted as ``better than''.)
\end{definition}
Subscript $l$ in $\succeq_l$ indicates that it's a comparison between labels rather than cost vectors.
In this label-dominance relationship, if $l\succeq_l l'$, condition (i) guarantees that label $l$ will have at least the same set of successor labels as $l'$.
Condition (ii) ensures that, if there exists a path $\pi'$ from $v_o$ to $v_d$ via $l'$, then the portion of the path from $l'$ to $v_d$ can be cut-and-paste to $l$, and the resulting path $\pi''$, which connects $v_o$ and $v_d$ via $l$, will incur component-wise no larger cost than $\pi'$.
Therefore, $l'$ can be discarded.\footnote{Similar comparison rules to the label-dominance in this work have been developed in GSIPP~\cite{gonzalez2012using} for a single-objective case, where a single non-arrival-time objective is minimized. The label-dominance here can be regarded as a generalization of the rule in GSIPP~\cite{gonzalez2012using} from single-objective to multi-objective.}

\subsubsection{Frontier Sets}
To keep track of multiple non-dominated trajectories at a state $s$, the notion of frontier set $\alpha(s)$ is introduced, which denote a set of non-dominated labels at $s$.
To initialize, $\alpha(s)$ for all states but the initial state are set to an empty set and $\alpha(s_o)$ is set to be a set containing $(s_o, \vec{0}, 0)$ only.
With that in hand, we now introduce the comparison procedure, as shown in Algorithm.~\ref{alg:compare}.
When a new label $l'$ is generated at state $s$ (i.e. $s(l')=s$), it is compared with every label $l \in \alpha(s)$.
If none of the labels in $\alpha(s)$ label-dominates $l'$, then $l'$ is inserted into $\alpha(s)$ and $l'$ is used to filter $\alpha(s)$, which removes all labels $l \in \alpha(s)$ that are label-dominated by $l'$.
By doing so, at any time of the search, $\alpha(s)$ maintains a set of labels at state $s$ with either a non-dominated cost vector or an earlier arrival time.
Finally, if a generated label is not label-dominated, it is inserted into OPEN for further expansion.

\begin{algorithm}[tbp]
	\caption{Pseudocode for LabelDominated($l'$)}\label{alg:compare}
 	\small
	\begin{algorithmic}[1]
		\State{$l'$ is the input label to be compared.}
		\ForAll{$l \in \alpha(s(l'))$}
		\State{\textbf{if} $ l \succ_l l'$ \textbf{then}}
		\State{\indent\Return true \Comment should be discarded}
		\EndFor
		\ForAll{$l \in \alpha(s(l'))$}
		\State{\textbf{if} $ l' \succ_l l$ \textbf{then}}
		\State{\indent remove $l$ from $\alpha(s(l'))$}
		\State{\indent remove $l$ from OPEN if OPEN contains $l$.}
		\EndFor
		\State{add $l'$ to $\alpha(s(l'))$}
		\State{\Return false \Comment should not be discarded}
	\end{algorithmic}
\end{algorithm}

\subsubsection{Filtering and termination}
During the search, when a label $l$ with $v(l)$ being the goal node is popped from OPEN, a Pareto-optimal trajectory is found and is inserted into $\mathcal{T}$, which is a set that contains all Pareto-optimal trajectories found during the search.
The trajectory represented by a label can be easily reconstructed by iteratively backtracking the parent pointers of labels.
Different from SIPP, what's new in MO-SIPP is that the cost vector $\vec{g}(l)$ is used to filter OPEN, which removes all candidate labels $l'$ in OPEN if $\vec{g}(l) \succeq \vec{g}(l')$ or $\vec{g}(l) = \vec{g}(l')$.
The intuition behind this filtering is that, any filtered candidate labels can not be part of a Pareto-optimal trajectory and is thus discarded, as the cost vectors of all edges in $G^t$ are non-negative.
MO-SIPP terminates when OPEN is empty, which guarantees that $\mathcal{T}$ contains all cost-unique Pareto-optimal trajectories.

\vspace{-1mm}
\subsection{Analysis}
In this section, we show that MO-SIPP is able to compute a set of cost-unique Pareto-optimal trajectories $\mathcal{T}^*$.
\begin{lemma}\label{lem:max_succ}
	Arriving at a state $s$ at the earliest possible time can generate the maximum number of successors.
\end{lemma}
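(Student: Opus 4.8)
The plan is to argue directly from the structure of SIPP-style successor generation, which is unchanged in MO-SIPP modulo the cost vectors. Let $s=(v,[t_a,t_b])$ be a state and suppose we arrive at $s$ along two trajectories, one with arrival time $t_r$ and another with arrival time $t_r'$, where $t_r \le t_r'$ (both arrival times lie in $[t_a,t_b]$ by definition of a state). I want to show that every successor state generated by the ``wait and move'' action from $s$ when arriving at time $t_r'$ is also generated when arriving at time $t_r$. Since a successor is a state (a node/safe-interval pair), not a label, it suffices to reason about which states are reachable.

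First I would recall the successor-generation rule: from $s$, for each graph-neighbor $w$ of $v$ (including $w=v$ for the wait action), and for each safe interval $[t_c,t_d]$ at $w$, the move succeeds iff there is some departure time $t \in [t_r', t_b]$ such that $t+1 \in [t_c,t_d]$ and the edge traversal is collision-free — equivalently, the set of feasible departure times is $[\max(t_c - 1, t_r'),\, \min(t_d - 1, t_b)]$ and this interval is nonempty. The key step is the monotonicity observation: replacing $t_r'$ by the smaller value $t_r$ enlarges (or leaves unchanged) this feasible departure window, since only the lower endpoint $\max(t_c-1,t_r')$ depends on the arrival time and it is nondecreasing in the arrival time. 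Hence if the window is nonempty for arrival time $t_r'$, it is nonempty for arrival time $t_r$, so the same successor state $(w,[t_c,t_d])$ is generated. This gives a set inclusion: $\mathrm{Succ}_{t_r'}(s) \subseteq \mathrm{Succ}_{t_r}(s)$, so arriving earliest maximizes the successor set. Finally I would note this is exactly condition (i) of label-dominance (Definition \ref{def:label-dom}) made explicit, and cite Theorem \ref{thm:sipp}(1) as the single-objective antecedent, remarking that the argument there transfers verbatim because successor \emph{states} depend only on arrival time, not on cost vectors.

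The main obstacle — really a matter of care rather than difficulty — is handling the wait action and the boundary of the safe interval correctly: one must confirm that waiting in place from time $t_r$ up to any time in $[t_r,t_b]$ is itself collision-free (true by maximality of the safe interval $[t_a,t_b]$), so that arriving at $t_r$ genuinely dominates the option set available at $t_r'$, including the possibility of simply waiting until $t_r'$ and then doing whatever the later arrival would have done. I would make this ``wait-then-mimic'' reduction explicit, since it is the cleanest way to see the inclusion without enumerating interval arithmetic, and it is also what sets up the companion argument (the cost-vector side, condition (ii)) needed in the subsequent lemmas.
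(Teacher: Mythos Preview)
Your proposal is correct and follows essentially the same approach as the paper: both argue that the set of reachable successor states from $s$ depends only on the arrival time (not on the cost vector), so an earlier arrival time $t_r \le t_r'$ yields a superset of successors via the containment $[t_r',t_b] \subseteq [t_r,t_b]$ of departure windows. Your version is more explicit---you spell out the interval arithmetic for the feasible departure window and give the ``wait-then-mimic'' reduction---whereas the paper's proof is a two-sentence sketch that simply asserts the inclusion; but the underlying idea is identical.
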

\begin{proof} Note that the set of reachable successors from a label corresponding to a state depends only on the arrival time of the label and is not dependent on the cost vector of the label. Therefore, given two labels $l=(s,\vec{g},t_r)$ and $l'=(s,\vec{g}',t'_r)$ at the same state $s=(v,[t_a,t_b])$ with $t_r \leq t'_r$, any reachable state from $l'$ (within time interval $[t'_r,t_b]$) is also reachable from $l$ (within time interval $[t_r,t_b]$), regardless of the cost vectors $\vec{g}$ or $\vec{g}'$. Hence proved.
\end{proof}
\vspace{1mm}

The second property of SIPP  in Theorem  \ref{thm:sipp} does not apply to MO-SIPP because SIPP aims to optimize the arrival time corresponding to a single objective while MO-SIPP aims to find non-dominated trajectories corresponding to multiple objectives. With the modified label comparison procedure in MO-SIPP, the following lemma holds.

\vspace{1mm}
\begin{lemma}\label{lem:dom_prune}
	At any time of the search, if a label at state $s$ is pruned, the trajectory represented by the label cannot be part of a cost-unique Pareto-optimal trajectory.
\end{lemma}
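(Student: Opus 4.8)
The plan is to show that a pruned label is always ``label-dominated'' (Definition~\ref{def:label-dom}) by some other label that is either in the frontier set $\alpha(s)$ or already on a path to a Pareto-optimal trajectory, and then argue that replacing the pruned label's trajectory by the dominating one yields a collision-free trajectory from $v_{init}$ to the goal whose cost vector is component-wise no larger, so the pruned label's trajectory cannot be the unique representative of any Pareto-optimal cost vector. There are two places in Algorithm~\ref{alg:mo_sipp} where a label can be pruned: (i) in \textsc{LabelDominated} (Algorithm~\ref{alg:compare}), when a newly generated label $l'$ is discarded because some $l \in \alpha(s(l'))$ satisfies $l \succeq_l l'$, or when an existing $l \in \alpha(s)$ is removed from $\alpha(s)$ and OPEN because a new label $l'$ satisfies $l' \succeq_l l$; and (ii) in \textsc{FilterOpen}, when a label $l'$ is removed from OPEN because a goal label $l$ with $\vec{g}(l) \succeq \vec{g}(l')$ or $\vec{g}(l) = \vec{g}(l')$ has been popped. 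I would treat these cases separately.

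For case (i), I would invoke the two conditions of label-dominance directly. Suppose $l \succeq_l l'$ with $s(l)=s(l')=(v,[t_a,t_b])$, $t_r(l) \le t_r(l')$ and $\vec{g}(l) + (t_r(l')-t_r(l))\,\vec{c}_{wait} \le \vec{g}(l')$. Let $\pi'$ be any collision-free trajectory from $v_{init}$ to $v_{goal}$ that passes through $l'$, i.e. whose prefix is the trajectory recorded by $l'$. By Lemma~\ref{lem:max_succ} (and the fact that $t_r(l) \le t_r(l')$), the suffix of $\pi'$ from $v$ onward is still executable starting from $l$: the agent can wait at $v$ during $[t_r(l), t_r(l')]$ (which lies inside the safe interval $[t_a,t_b]$, since $t_r(l') \le t_b$) and then follow the same moves, remaining collision-free because every successor state reachable from $l'$ is reachable from $l$. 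Call the resulting trajectory $\pi''$. Its cost vector is $\vec{g}(l) + (t_r(l')-t_r(l))\vec{c}_{wait} + (\vec{g}(\pi') - \vec{g}(l'))$, which by condition (ii) is $\le \vec{g}(\pi')$ component-wise. Hence for every trajectory through $l'$ there is a trajectory through $l$ with no larger cost, so no Pareto-optimal cost vector is lost by discarding $l'$; and if $l'$ and $l$ produce the same cost vector, keeping only one is exactly what ``cost-unique'' allows. The symmetric sub-case ($l'$ removing $l$ from $\alpha(s)$) is identical with the roles swapped, and one must note that removing $l$ from OPEN is harmless because $l$ has not yet been expanded, so no descendant of $l$ exists that would be orphaned.

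For case (ii), the goal label $l$ has already been popped and recorded in $\mathcal{T}$, so its trajectory is a Pareto-optimal candidate. If $\vec{g}(l) \succeq \vec{g}(l')$ for a candidate $l'$ in OPEN, then since all edge costs in $G^t$ are non-negative, any trajectory extending $l'$ to the goal has cost vector $\vec{g}(l') + (\text{non-negative tail}) \succeq$ or $=\vec{g}(l') \succeq \vec{g}(l)$, hence is dominated by (or equal to) the already-found trajectory of $l$; if $\vec{g}(l) = \vec{g}(l')$ the same conclusion holds by the cost-unique convention. So again the pruned label cannot be the unique representative of a Pareto-optimal cost vector.

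The main obstacle I expect is the bookkeeping in case (i) around the ``cut-and-paste'' argument: I must be careful that the waiting action inserted at $v$ truly keeps the trajectory collision-free (this needs $t_r(l') \le t_b$, i.e. the later arrival still lies in the same safe interval, which is guaranteed because $l$ and $l'$ share the state $s$ and hence the interval $[t_a,t_b]$) and that the arithmetic identity $\vec{g}(\pi'') = \vec{g}(l) + (t_r(l')-t_r(l))\vec{c}_{wait} + (\vec{g}(\pi')-\vec{g}(l'))$ is exactly the quantity bounded by condition (ii) of Definition~\ref{def:label-dom}. A secondary subtlety is handling the case where the pruned label is itself later needed as a \emph{parent} of some label already in OPEN; I would argue this cannot happen in case (i) because a label removed by \textsc{LabelDominated} is removed \emph{before} being expanded, and in case (ii) because \textsc{FilterOpen} only removes labels that have not yet been popped. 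With these points pinned down, the lemma follows.
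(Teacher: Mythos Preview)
Your proposal is correct and follows essentially the same case split as the paper's proof (the paper lists three pruning cases --- \textsc{FilterOpen}, label-dominated by an existing $l\in\alpha(s)$, and label-dominated by a newly entering $l$ --- which you group as (ii) and the two sub-cases of (i)); your cut-and-paste argument is exactly the justification the paper gives more tersely via Lemma~\ref{lem:max_succ} and the discussion following Definition~\ref{def:label-dom}. One small slip: in case~(ii) your chain ``$\vec{g}(l') + (\text{non-negative tail}) \succeq \text{ or } = \vec{g}(l') \succeq \vec{g}(l)$'' has the $\succeq$ symbols reversed relative to the paper's convention ($a\succeq b$ means $a$ is component-wise \emph{no larger}), so the correct inequality chain is $\vec{g}(l) \le \vec{g}(l') \le \vec{g}(l')+\text{tail}$, from which $\vec{g}(l)$ dominates or equals the extension --- your stated conclusion is right, only the intermediate notation needs fixing.
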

\begin{proof}
In MO-SIPP, there are three cases where a label $l'$ is pruned:
\begin{itemize}
	\item $l'$ is filtered by the FilterOpen procedure;
	\item $l'$ is label-dominated by an existing label $l\in \alpha(s(l'))$ (line 3 in Algorithm~\ref{alg:compare});
	\item $l'$ is label-dominated by a label $l$ that enters $\alpha(s(l'))$  (line 6 in Algorithm~\ref{alg:compare}). 
\end{itemize}
For either of those three cases, $\vec{g}(l')$ is dominated by or equal to the cost vector of some other labels expanded or to be expanded.
In addition, if a label $l'$ is label-dominated by $l$, any possible successors of $l'$ is also reachable from $l$ since $t_r(l) \leq t_r(l')$.
Therefore, $l'$ cannot be part of a cost-unique Pareto-optimal trajectory.
\end{proof}
\vspace{1mm}

Therefore, in MO-SIPP, label expansion always generates successors with the earliest arrival time which guarantees the maximum number of successors (Lemma~\ref{lem:max_succ}).
Those generated successor labels are only pruned if they cannot lead to a cost-unique Pareto-optimal trajectory (Lemma~\ref{lem:dom_prune}).
MO-SIPP terminates when OPEN is empty, which means all labels are expanded or pruned, which computes a $\mathcal{T}^*$. This property can be summarized with the following theorem:

\vspace{1mm}
\begin{theorem}\label{thm:mo_sipp}
	MO-SIPP algorithm is able to compute all cost-unique Pareto-optimal trajectories connecting the start and the goal.
\end{theorem}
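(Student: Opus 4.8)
Following the remark that precedes the theorem, the argument assembles Lemma~\ref{lem:max_succ} and Lemma~\ref{lem:dom_prune} into a two-sided claim: (i) \emph{soundness} --- every trajectory inserted into $\mathcal{T}$ during the run is collision-free, has a Pareto-optimal cost vector, and no two trajectories in $\mathcal{T}$ share a cost vector; and (ii) \emph{completeness} --- for each cost-unique Pareto-optimal trajectory $\tau^*$, a trajectory with cost vector $\vec{g}(\tau^*)$ is eventually inserted into $\mathcal{T}$. Termination of the main loop I would inherit from the finiteness of $G^t$ together with the fact that each frontier set $\alpha(s)$ is kept an antichain under $\succeq_l$ by Algorithm~\ref{alg:compare}, so OPEN empties after finitely many iterations; combined with (i) and (ii) this shows that $\mathcal{T}$ at termination realizes every Pareto-optimal cost vector exactly once and nothing else, i.e., it is a valid $\mathcal{T}^*$.

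For soundness I would argue as follows. Collision-freeness is inherited from the SIPP state space: a state $s=(v,[t_a,t_b])$ is a safe interval and \textsc{GetSuccessors} produces only ``wait-and-move'' transitions into safe intervals, so any trajectory reconstructed from parent pointers avoids the dynamic obstacles. For Pareto-optimality, when a label $l$ with $v(l)$ the goal node is popped we have $\vec{h}(s(l))=\vec{0}$, hence $\vec{f}(l)=\vec{g}(l)$, and $l$ being popped means no label currently in OPEN has an $\vec{f}$-value dominating $\vec{g}(l)$. Since all edge and wait costs are non-negative, any goal cost vector not yet in $\mathcal{T}$ is realized by extending some prefix whose label is either still in OPEN --- in which case its $\vec{f}$-value componentwise underestimates that goal cost and does not dominate $\vec{g}(l)$, so the goal cost cannot dominate $\vec{g}(l)$ either --- or already pruned --- in which case Lemma~\ref{lem:dom_prune} says that extension cannot realize a new cost-unique Pareto-optimal vector. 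Hence $\vec{g}(l)$ is Pareto-optimal. Cost-uniqueness holds because \textsc{FilterOpen}$(l)$ deletes from OPEN every label $l'$ with $\vec{g}(l)\succeq\vec{g}(l')$ or $\vec{g}(l)=\vec{g}(l')$, so no later goal label carries $\vec{g}(l)$ again, and, by the same underestimate argument, no later goal label carries a cost dominating $\vec{g}(l)$.

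For completeness, fix a cost-unique Pareto-optimal trajectory $\tau^*$ and view it as a sequence of safe-interval states $s_0=s_o,\dots,s_k$ with $v(s_k)=v_{goal}$, arrival times $t_0\le\cdots\le t_k$, and prefix cost vectors $\vec{g}_0=\vec{0},\dots,\vec{g}_k=\vec{g}(\tau^*)$; write $l_i=(s_i,\vec{g}_i,t_i)$. The core is the invariant: \emph{as long as $\mathcal{T}$ contains no trajectory of cost $\vec{g}(\tau^*)$, OPEN contains a label $l$ with $s(l)=s_i$ for some $i$ and $l\succeq_l l_i$}. The base case holds with $l=l_o$. For the inductive step, when such an $l$ is popped and expanded, Lemma~\ref{lem:max_succ} guarantees that $l$ --- which reaches $s_i$ no later than $\tau^*$ does --- generates a successor $l'$ at $s_{i+1}$, and a short ``cut-and-paste'' computation with wait-and-move costs (exactly the reasoning after Definition~\ref{def:label-dom}, splitting the transition into a waiting portion and an edge move) gives $l'\succeq_l l_{i+1}$. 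If $l'$ survives the comparison it re-establishes the invariant at index $i+1$; if $i+1=k$ the surviving goal label has cost $\vec{g}(l')\le\vec{g}(\tau^*)$ componentwise and hence, by Pareto-optimality of $\vec{g}(\tau^*)$, equals $\vec{g}(\tau^*)$, so it enters $\mathcal{T}$. If instead $l'$ or the tracked label $l$ itself is pruned, I would split into cases along Lemma~\ref{lem:dom_prune}: a label pruned by label-dominance is replaced by a label-dominating one that is either already in OPEN (Algorithm~\ref{alg:compare}, lines~6--11) or was already expanded, and by transitivity of $\succeq_l$ that replacement still label-dominates $l_i$; a label pruned by \textsc{FilterOpen} satisfies $\vec{g}_{goal}\succeq\vec{g}(l)$ or $\vec{g}_{goal}=\vec{g}(l)$ for some cost $\vec{g}_{goal}$ already in $\mathcal{T}$, and combining this with $\vec{g}(l)\le\vec{g}_i\le\vec{g}(\tau^*)$ (from $l\succeq_l l_i$) and the Pareto-optimality of $\vec{g}(\tau^*)$ forces $\vec{g}_{goal}=\vec{g}(\tau^*)$, i.e., $\mathcal{T}$ already contains the desired cost. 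In every case the invariant persists or is no longer needed, and since each label in the resulting chain has cost vector bounded componentwise by $\vec{g}(\tau^*)$ and arrival time bounded by $t_k$, the chain cannot cycle; therefore a trajectory of cost $\vec{g}(\tau^*)$ is inserted into $\mathcal{T}$ before OPEN empties.

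The step I expect to be the main obstacle is the inductive step of the completeness invariant. Two points need genuine care: first, that a successor of a label label-dominating $l_i$ again label-dominates $l_{i+1}$ --- this requires separating the transition into its waiting and edge portions and verifying condition (ii) of Definition~\ref{def:label-dom} with the correct $\vec{c}_{wait}$ bookkeeping, using $\vec{c}_{wait}\ge\vec{0}$; and second, ruling out an infinite chain of prunings/replacements, which relies on the boundedness of the relevant labels (cost vectors $\le\vec{g}(\tau^*)$, arrival times $\le t_k$) to forbid infinite descent. A subtler accounting point inside this is the \textsc{FilterOpen} case: one must show that removal of the tracked label could only have been triggered by a trajectory whose cost is \emph{exactly} $\vec{g}(\tau^*)$, not by some unrelated dominating cost, which again leans on non-negativity of the costs and the Pareto-optimality of $\tau^*$. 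The remaining pieces --- collision-freeness, the $\vec{f}$-admissibility argument for soundness, and antichain-based termination --- are routine.
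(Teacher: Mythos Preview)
Your proposal is correct and follows the same route as the paper: the paper's own argument for Theorem~\ref{thm:mo_sipp} is the short informal paragraph immediately preceding the statement, which simply invokes Lemma~\ref{lem:max_succ} (earliest arrival yields all successors) and Lemma~\ref{lem:dom_prune} (only non-contributing labels are pruned) and observes that termination on an empty OPEN therefore leaves $\mathcal{T}=\mathcal{T}^*$. Your soundness/completeness decomposition, explicit label-dominance invariant along a fixed $\tau^*$, and termination bookkeeping are a rigorous fleshing-out of exactly that sketch rather than a different approach.
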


	\section{Multi-objective Conflict-based Search}\label{sec:mocbssipp}
	\ifthenelse{\boolean{shortver}}{%
		We refer the reader to \cite{ren2021multi,ren21mosipp} for a detailed description about multi-objective multi-agent path finding (MOMAPF) problem and the basic multi-objective conflict-based search (MO-CBS) algorithm.
In this section, we focus on how to integrate the developed MO-SIPP as the low level planner for MO-CBS and MO-CBS-t algorithms~\cite{ren2021multi}.

The low level search in MO-CBS requires a single-agent multi-objective planner that can find all cost-unique (individual) Pareto-optimal paths for a single-agent while satisfying a set of constraints.
The set of constraints includes both a set of node constraints $\{(v,t)\}$, where the agent is prevented from entering node $v$ at time $t$, and a set of edge constraints $\{(e,t)\}$, where agent is prevented from moving through edge $e$ at time $t$.
In MO-CBS (and MO-CBS-t), NAMOA* is used as the low level planner to search $G^t$, where constraints are represented as blocked nodes and edges in $G^t$.

To use MO-SIPP as the low level planner, node constraints $\{(v,t)\}, v\in V$ can be directly considered as nodes in $G^t$ that are occupied by some dynamic obstacles.
Edge constraints $\{(e,t), e\in E\}$ can be maintained as a lookup table so that during label expansion, MO-SIPP avoids using edge $e$ at time $t$ when generating successors with the minimum arrival time.
Additionally, when a label $l$ at the goal node is expanded, we also need to check whether there exists a node constraint $(v,t)$ with $t > t_r(l)$.
\begin{itemize}
	\item If there exists such a node constraint, it indicates that agent can not stay at node $v(l)$ after the arrival because $v(l)$ is blocked at some future time step. In this case, MO-SIPP has not yet found a Pareto-optimal trajectory and this label will be further expanded like other candidate labels.
	\item Otherwise, MO-SIPP finds a Pareto-optimal trajectory.
\end{itemize}

Finally, as MO-SIPP is able to compute all cost-unique Pareto-optimal trajectories, (like the original NAMOA* in $G^t$), the property of MO-CBS is not affected when MO-SIPP is embedded as the low level planner.

	}{
		
In this section, we first review the definition of multi-objective multi-agent path finding (MOMAPF) problem and then present how to embed MO-SIPP as the low level planner in MO-CBS.

\subsection{MOMAPF Problem Description}

Let index set $I=\{1,2,\dots,N\}$ denote a set of $N$ agents. 
All agents move in a workspace represented as a finite graph $G=(V,E)$, where the vertex set $V$ represents all possible locations of agents and the edge set $E = V \times V$ denotes the set of all the possible actions that can move an agent between any two vertices in $V$.
An edge between two vertices $u, v \in V$ is denoted as $(u,v)\in E$ and the cost of an edge $e \in E$ is a $M$-dimensional non-negative vector cost$(e) \in (\mathbb{R}^{+})^M\backslash\{0\}$ 
with $M$ being a positive integer.

Here, we use a superscript $i \in I$ over a variable to represent the specific agent that the variable belongs to (e.g. $v^i\in V$ means a vertex with respect to agent $i$). 
Let $\pi^i(v^i_{1}, v^i_{\ell})$ be a path that connects vertices $v^i_{1}$ and $v^i_{\ell}$ via a sequence of vertices $(v^i_{1},v^i_{{2}},\dots,v^i_{\ell})$ in the graph $G$. 
Let $g^i(\pi^i(v^i_{1}, v^i_\ell))$ denote the $M$-dimensional cost vector associated with the path, which is the sum of the cost vectors of all the edges present in the path, $i.e.$, $g^i(\pi^i(v^i_{1}, v^i_{\ell})) = \Sigma_{j=1,2,\dots,{\ell-1}} \text{cost}(v^i_{{j}}, v^i_{{j+1}})$. 

All agents share a global clock and all the agents start their paths at time $t=0$. Each action, either wait or move, for any agent requires one unit of time. 
Any two agents $i,j \in I$ are said to be in conflict if one of the following two cases happens. The first case is a ``vertex conflict'' where two agents occupy the same location at the same time. The second case is an ``edge conflict'' (also known as ``swap conflict'' in the literature) where two agents move through the same edge from opposite directions at times $t$ and $t+1$ for some $t$.   

Let $v_o^i, v^i_f \in V$ respectively denote the initial location and the destination of agent $i$.
Without loss of generality, to simplify the notations, we also refer to a path $\pi^i(v^i_{o}, v^i_{f})$ for agent $i$ between its initial location and destination as simply $\pi^i$. Let $\pi=(\pi^1,\pi^2,\dots, \pi^N)$ represent a joint path for all the agents.
The cost vector of this joint path is defined as the vector sum of the individual path costs over all the agents, $i.e.$, $g(\pi) = \Sigma_i g^i(\pi^i)$.

To compare any two joint paths, we compare the cost vectors corresponding to them using the dominance defined in Def.~\ref{def:dom}.
Any two joint paths are non-dominated if the corresponding cost vectors do not dominate each other. The set of all non-dominated conflict-free joint paths is called the {\it Pareto-optimal} set.
This work aims to find any maximal subset of the Pareto-optimal set, where any two joint paths in this subset do not have the same cost vector.

\subsection{A Brief Review of MO-CBS}

Multi-objective conflict-based search (MO-CBS) is a two-level search algorithm that begins by computing a set of all cost-unique individual Pareto-optimal paths $\Pi^i_o, \forall i\in I$ for each agent independently via a single-agent multi-objective path planner, such as NAMOA*~\cite{mandow2008multiobjective}.
By taking the combination $\Pi_o = \Pi^1_o\times \Pi^2_o \times \dots \times \Pi^N_o$, an initial set of joint paths is generated.
For each joint path $\pi_o \in \Pi_o$, a corresponding high-level search root node, which contains $\pi_o$, the cost vector of $\pi_o$ and an empty constraint set, is generated and inserted into OPEN.
In MO-CBS, OPEN is a list containing all candidate high-level nodes.

In each high-level search iteration, a candidate node $P$ with a non-dominated cost vector within OPEN is popped and checked for conflict along every pair of individual paths.
For the first detected conflict between some pair of agents $i,j$, MO-CBS splits the conflict and generates two constraints, where one constraint is imposed on agent $i$ while the other constraint is added to agent $j$.
Next, for both constraints, a low level search, which is a single-agent multi-objective planner, is invoked to compute the individual Pareto-optimal paths $\Pi^i$ for agent $i$ (and $j$) while satisfying all the constraints added for agent $i$  (this can be found by iterative backtracking the constraints of all ancestor high-level nodes).
For each $\pi^i \in \Pi^i$, a corresponding high-level node $P'$ is generated from $P$ and is inserted into OPEN, where the joint path in $P'$ is copied from $P$ with agent $i$'s individual path replaced with $\pi^i$.

During the high-level search of MO-CBS, if a conflict-free joint path $\pi$ is found, 
MO-CBS uses the cost vector of $\pi$ to filter candidates nodes in OPEN, which removes high-level nodes with dominated cost vectors.
In addition, MO-CBS uses the cost vector of $\pi$ to filter $\mathcal{S}$, which is a set of collision-free joint paths computed so far, and then add $\pi$ into $\mathcal{S}$.
MO-CBS terminates when OPEN is empty.
At termination, MO-CBS is guaranteed to find all cost-unique Pareto-optimal joint paths.

Due to the (possibly) large size of $\Pi_o$ at initialization, a variant of MO-CBS that employs a tree-wise expansion strategy (MO-CBS-t) is introduced in~\cite{ren2021multi} to overcome this difficulty.
In MO-CBS-t, root nodes are generated on demand and one root node is exhaustively searched until OPEN depletes before the next root is generated.
When all roots are generated and searched, MO-CBS-t terminates and finds all cost-unique Pareto-optimal joint paths.
MO-CBS-t also enjoys the benefits of identifying the first feasible joint path quickly.
Numerical results~\cite{ren2021multi} show that the first feasible joint path found by MO-CBS is typically Pareto-optimal or very close to be Pareto-optimal.

\subsection{Using MO-SIPP as the Low Level Planner}
The low level search in MO-CBS requires a single-agent multi-objective planner that can find all cost-unique (individual) Pareto-optimal paths for a single-agent while satisfying a set of constraints.
The set of constraints includes both a set of node constraints $\{(v,t)\}$, where the agent is prevented from entering node $v$ at time $t$, and a set of edge constraints $\{(e,t)\}$, where agent is prevented from moving through edge $e$ at time $t$.
In MO-CBS (and MO-CBS-t), NAMOA* is used as a low level planner to search a time-augmented graph $G^t$, where constraints are represented as blocked nodes and edges in $G^t$.

To use MO-SIPP as the low level planner, node constraints $\{(v,t)\}, v\in V$ can be directly considered as nodes in $G^t$ that are occupied by some dynamic obstacles.
Edge constraints $\{(e,t), e\in E\}$ can be maintained as a lookup table so that during label expansion, MO-SIPP avoids using edge $e$ at time $t$ when generating successors with the minimum arrival time.
Additionally, when a label $l$ at the goal node is expanded, we also need to check whether there exists a node constraint $(v,t)$ with $t > t_r(l)$.
\begin{itemize}
	\item If there exists such a node constraint, it indicates that agent can not stay at node $v(l)$ after the arrival because $v(l)$ is blocked at some future time step. In this case, MO-SIPP has not yet found a Pareto-optimal trajectory and this label will be further expanded like other candidate labels.
	\item Otherwise, MO-SIPP has found a Pareto-optimal trajectory.
\end{itemize}

Finally, as MO-SIPP is able to compute all cost-unique Pareto-optimal trajectories, (like the original NAMOA* in $G^t$), the property of MO-CBS is not affected when MO-SIPP is embedded as the low level planner.

	}

	\section{Numerical Results}\label{sec:result}
	
\subsection{Test Settings and Implementation}
Our test settings follows the ones in MO-CBS~\cite{ren2021multi}.
We selected four (grid) maps from \cite{stern2019multi} and generated an un-directed graph $G$ by making each grid four-connected.
To assign cost vectors to edges in $G$, we first assigned every agent, a cost vector $a^i, \forall i \in I$ of length $M$ (the number of objectives) and assigned every edge $e$ in $G$ a scaling vector $b(e)$ of length $M$, where each element in both $a^i$ and $b(e)$ were randomly sampled from integers in $[1, 10]$.
The range $[1,10]$ follows the convention used in \cite{mandow2005new}.
The cost vector for agent $i$ to go through $e$ is the component-wise product of $a^i$ and $b(e)$. If agent $i$ wait in place, the cost incurred is $a^i$ per time step.
We use unit vector scaled by Manhattan distance between each node $u\in G$ and the goal node as the heuristic vector for any states $s$ with $v(s)=u$.
We tested the algorithms by varying the number of objectives ($M$) and the number of agents ($N$) within a run time limit of {\it five} minutes.
Our comparison involves MOM*~\cite{ren2021subdimensional}, MO-CBS-t~\cite{ren2021multi} and MO-CBS-ts, which ``s'' stands for MO-SIPP.
All algorithms are implemented in Python and compared on a computer with an Intel Core i7 CPU and 16GB RAM. 

\begin{figure}[tb]
	\centering
	\includegraphics[width=\linewidth]{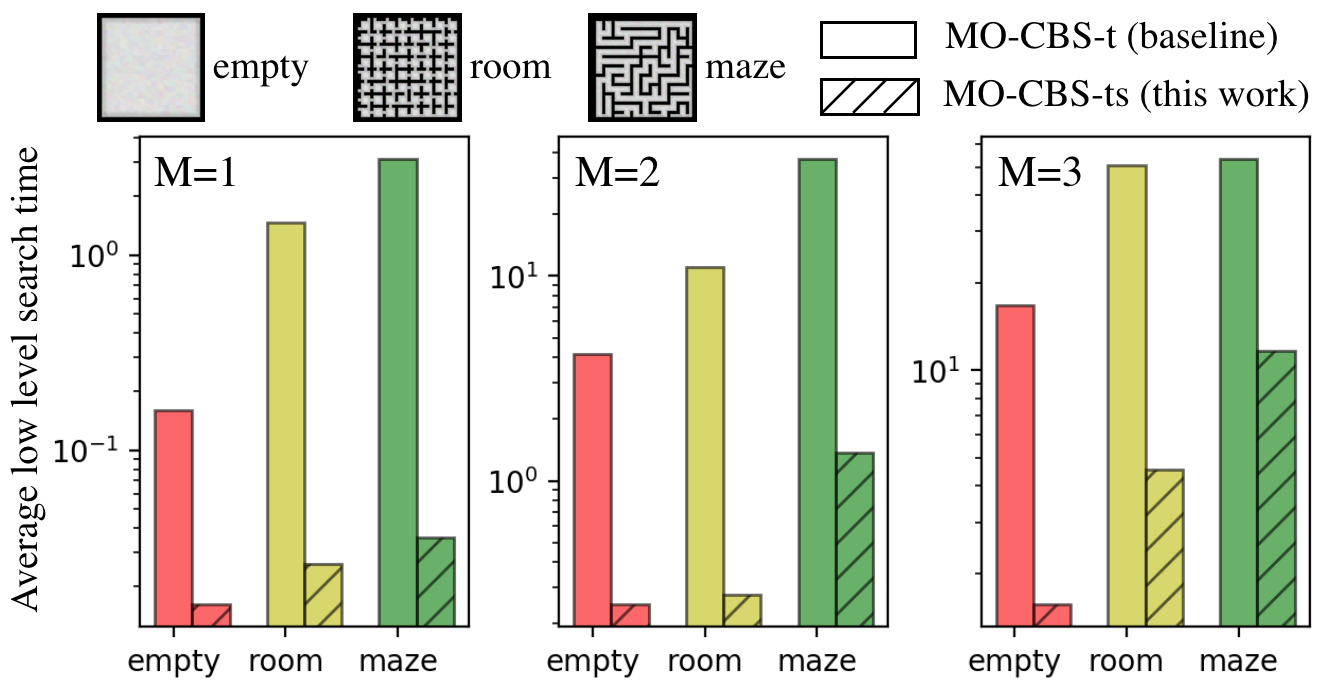}
	\caption{Comparing the average low level search time with number of objectives $M=1,2,3$ in different maps, while number of agents $N=2$ is fixed.}
	\label{fig:low_level_avg}
\end{figure}

\begin{figure}[h!]
	\centering
	\includegraphics[width=\linewidth]{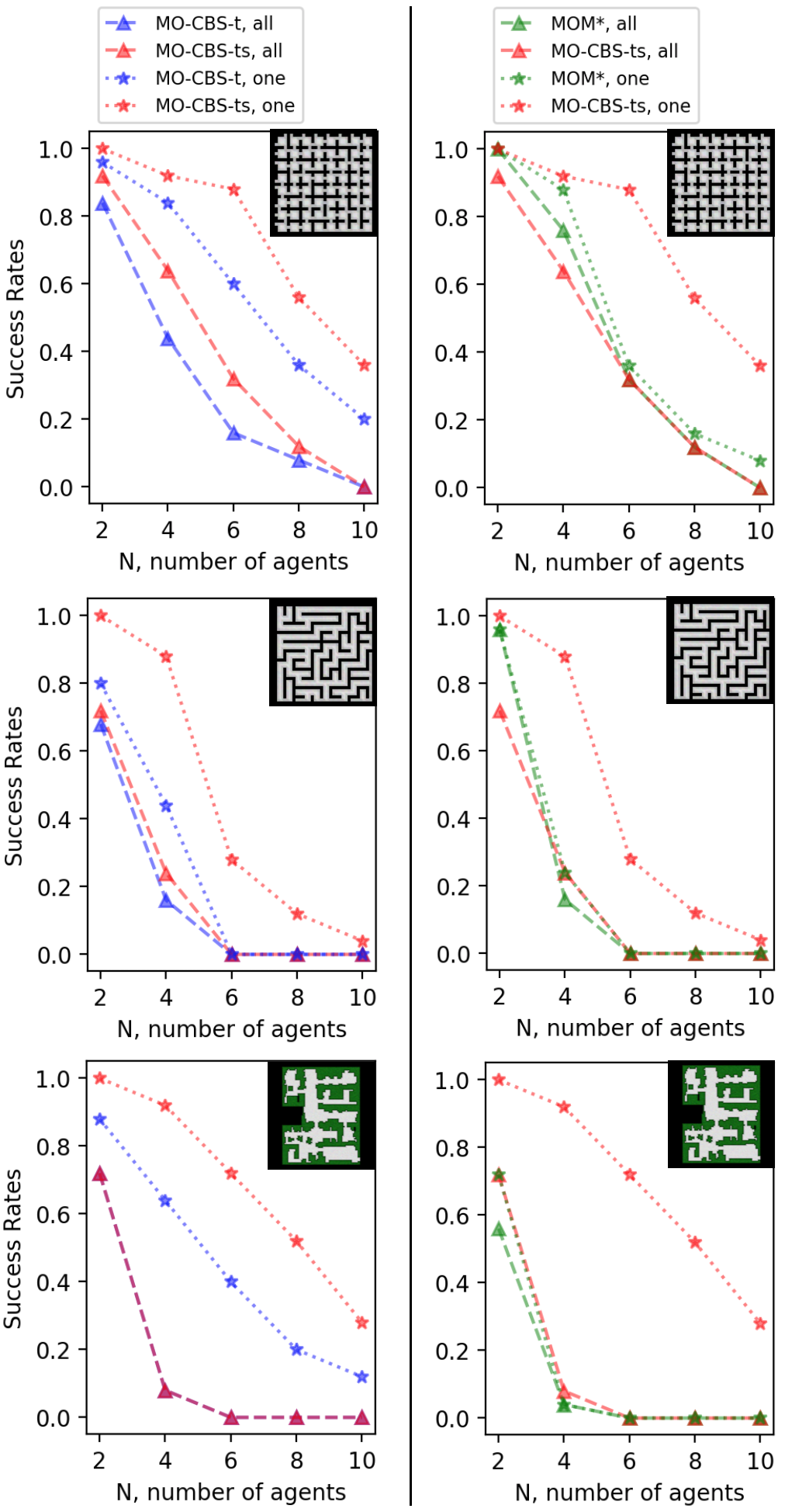}
	\caption{Success rates of MO-CBS-t (baseline), MOM* (baseline) and MO-CBS-ts (proposed) about (1) finding all cost-unique Pareto-optimal joint paths and (2) finding at least one feasible joint path, within a time limit of five minutes in different maps with a varying $N$ and a fixed $M=2$. Left column compares MO-CBS-t and MO-CBS-ts while the right column compares MOM* and MO-CBS-ts.}
	\label{fig:mocbs_succes_rates}
\end{figure}

\subsection{Low Level Search Comparison}\label{sec:result:low_level}

Both MO-CBS and MO-CBS-t use the same low level planner and MO-CBS-t serves as a baseline for comparison.
In both MO-CBS-t and MO-CBS-ts, all high-level nodes with non-dominated cost vectors in OPEN are prioritized in lexicographic order, which enforces a deterministic and same order for expansion in both algorithms.
For each test instance, the average low level search time per call $\bar{t}_{instance}$ is computed for both MO-CBS-t and MO-CBS-ts.
Next, for each map, $\bar{t}_{instance}$ is averaged over all instances and this average (denoted as $\bar{t}_{map}$) is plotted in Fig.~\ref{fig:low_level_avg}.

Fig.~\ref{fig:low_level_avg} shows $\bar{t}_{map}$ in empty, room and maze like maps with $M$ varied from 1 to 3 and $N$ fixed at 2.
In all maps, the low level search in MO-CBS-ts (this work) runs much faster than the low level search in MO-CBS-t (baseline). In general, we observed an order of magnitude improvement in the low level search time for $M=1,2,3$.
For example, for $M=2$ and the room map, the low level search in MO-CBS-t requires about 10 seconds on average while MO-CBS-ts requires less than one second.

\ifthenelse{\boolean{shortver}}{%
}
{
	\begin{figure}[h!]
		\centering
		\includegraphics[width=0.9\linewidth]{risk_map.png}
		\caption{(Left) Risk model. (Right) A risk map where black cells represents semi-constructed architecture and while cells are completely safe cells where risk scores are zero. Grey cells have non-zero risk scores and darker color indicates higher risk score.}
		\label{fig:risk_map}
		\vspace{-2mm}
	\end{figure}
	
	\begin{figure*}[h!]
		\centering
		\includegraphics[width=\linewidth]{solution_construction.png}
		\caption{Leftmost plot shows the Pareto-optimal front of the construction site example. The three plots on the right show the joint path of agents corresponding to the red, green and orange solution respectively.}
		\label{fig:solution_construction}
		\vspace{-2mm}
	\end{figure*}
}

\subsection{Success Rates Comparison}

The proposed MO-CBS-ts is compared with MO-CBS-t in terms of (1) success rates of finding all cost-unique Pareto-optimal joint paths and (2) success rates of finding at least one feasible joint path, within the five minutes time limit.
Note that the first solution computed by both MO-CBS-t and MO-CBS-ts is not guaranteed to be Pareto-optimal optimal but has been shown to be empirically near Pareto-optimal~\cite{ren2021multi}.
Additionally, during the test after finding the first feasible solution, both MO-CBS-t and MO-CBS-ts keep running in pursuit of the entire cost-unique Pareto-optimal set.
Here, $M=2$ is fixed and $N$ varies.
As shown in the left column in Fig.~\ref{fig:mocbs_succes_rates}, in terms of metric (1), the proposed MO-CBS-ts outperforms MO-CBS-t in the room setting and performs no worse than MO-CBS-t in the other two settings.
When $N=4,6$, in the room map, the success rates of (1) are improved by $\approx$ 20\%.
In terms of metric (2), MO-CBS-ts outperforms MO-CBS-t in all settings.
When $N=4$, in the maze map, the success rate of (2) is improved by $\approx$ 40\%.
The results show that, with an improved low level planner, the proposed MO-CBS-ts outperforms MO-CBS-t.
The improvement in metric (2) indicates that the search process of MO-CBS is sped up by using MO-SIPP since both MO-CBS-t and MO-CBS-ts are enforced the same expansion order on the high level (as explained in Sec.~\ref{sec:result:low_level}) and MO-CBS-ts is more likely to find the first solution.

Additionally, it's a bit surprising that the significant improvement in the low level search (Fig.~\ref{fig:low_level_avg}) leads to only a moderate or modest improvement in the success rate of finding all cost-unique Pareto-optimal solutions.
The main reason lies in the complexity of the high level search in MO-CBS~\cite{ren2021multi} and the enormous size of the Pareto-optimal set~\cite{ren2021subdimensional}.
This result implies the necessity to improve the high level search in MO-CBS along with the low level improvement, which is planned as our next step (Sec.~\ref{sec:conclude}).

\subsection{Comparison with MOM*}
MOM*~\cite{ren2021subdimensional} is used as another baseline for comparison.
As shown in the right column in Fig.~\ref{fig:mocbs_succes_rates}, with a fixed $M=2$ and a varying $N$, in terms of the success rates of finding at least one feasible joint path, MO-CBS-ts outperforms MOM* in all maps.
To find all cost-unique Pareto-optimal joint paths, there is no algorithm that outperforms the other in all settings.

	\section{Conclusion}\label{sec:conclude}
	
This article considers the problem of multi-objective multi-agent path finding (MOMAPF).
We first develop a multi-objective version of the well-known safe-interval path planning (SIPP) algorithm named MO-SIPP and show that MO-SIPP computes all cost-unique Pareto-optimal trajectories connecting a given start and goal location in the presence of dynamic obstacles.
We then combine MO-SIPP with MO-CBS and propose a new algorithm called MO-CBS-ts for MOMAPF.
Our numerical results show that MO-CBS-ts significantly improves the average low level search time by an order of magnitude and improves the overall success rates in general.
Although the proposed MO-SIPP is presented as the low level planner for MO-CBS, the MO-SIPP is a general single-agent multi-objective planner and can be applied to other applications as well, when Pareto-optimal trajectories subject to multiple objectives is required.


There are several possible directions for future work.
First, one can consider improving the high level search of MO-CBS by leveraging techniques designed for (single-objective) CBS (such as \cite{boyarski2015icbs, li2019disjoint}). Besides, one can also consider develop a sub-optimal version of MO-CBS that approximates the Pareto-optimal set with guarantees by leveraging (single-objective) bounded sub-optimal CBS~\cite{barer2014suboptimal, li2020eecbs}.
	
    \section*{Acknowledgment}
    This material is based upon work supported by the National Science Foundation under Grant No. 2120219 and 2120529. Any opinions, findings, and conclusions or recommendations expressed in this material are those of the author(s) and do not necessarily reflect the views of the National Science Foundation.
%
%
%
	
	\bibliographystyle{plain}
	\bibliography{references}

\end{document}